\documentclass{article} 
\usepackage[final]{colm2026_conference}
\usepackage{tabularx}
\usepackage{microtype}
\usepackage{hyperref}
\usepackage{url}
\usepackage{booktabs}
\usepackage{multirow}
\usepackage[table,dvipsnames]{xcolor}
\usepackage{graphicx}
\usepackage{subcaption}
\usepackage{wrapfig}
\usepackage{tikz}
\usetikzlibrary{shapes.geometric, arrows.meta, positioning, calc, fit, backgrounds, shadows}

\definecolor{avgrow}{RGB}{245, 245, 245}    

\usepackage{amsmath}
\usepackage{amssymb}
\usepackage{mathtools}
\usepackage{amsthm}
\usepackage{tcolorbox}

\usepackage[capitalize,noabbrev]{cleveref}

\usepackage{lineno}

\definecolor{darkblue}{rgb}{0, 0, 0.5}
\hypersetup{
  colorlinks=true,
  citecolor=darkblue,
  linkcolor=darkblue,
  urlcolor=darkblue,
  pdftitle={StateBridge: Training-free Hidden-state Alignment for Latent Communication in LLM Multi-Agent Systems},
  pdfauthor={Yanwen Peng, Delvin Ce Zhang, Xi Wang, Nikolaos Aletras}
}

\theoremstyle{plain}
\newtheorem{theorem}{Theorem}[section]
\newtheorem{proposition}[theorem]{Proposition}

\newtheorem{corollary}[theorem]{Corollary}
\theoremstyle{definition}

\theoremstyle{remark}

\title{StateBridge: Training-free Hidden-state Alignment for Latent Communication in LLM Multi-Agent Systems}

\author{
Yanwen Peng, Delvin Ce Zhang, Xi Wang, Nikolaos Aletras\\
School of Computer Science, University of Sheffield\\
Sheffield, United Kingdom\\
\texttt{\{ypeng86,delvin.ce.zhang,xi.wang,n.aletras\}@sheffield.ac.uk}
}

\begin{document}

\ifcolmsubmission
\linenumbers
\fi

\maketitle

\begin{abstract}
Large language model based multi-agent systems usually communicate in text, i.e., using discrete tokens. However, text introduces a discrete bottleneck. Converting the sender's continuous hidden states into discrete tokens discards information that token identities alone cannot capture. Recent work proposes latent communication as an alternative, where agents transmit hidden representations directly without converting them to text. However, existing latent methods either inject working memory layer by layer across the transformers, or require trained projectors that limit portability. We propose StateBridge, a training-free latent communication approach that aligns the sender's final-layer hidden states to the receiver's input space via a closed-form orthogonal transformation. Lightweight norm calibration and vocabulary anchoring ensure compatibility with the pretrained input distribution. The aligned states are prepended to the input of the receiver agent as a continuous prefix. We evaluate StateBridge on math reasoning, code generation, and question answering with four models from two families. StateBridge achieves the best or tied-best score on 22 out of 26 model–task pairs, consistently outperforming the strongest baseline.\footnote{Code is available at \url{https://github.com/YanwenPneg/StateBridge}.}
\end{abstract}

\section{Introduction}
\label{sec:intro}

Modern Multi-Agent (MA) systems consist of Large Language Model (LLM) agents that collaborate for tackling complex tasks requiring planning, critique, and verification \citep{wu2024autogen, li2023camel}. Yet their performance hinges on the communication channel between agents \citep{guo2024llmmultiagents, yan2025beyond}. Natural language remains the dominant medium, but text introduces a discrete bottleneck. The sender must transform its continuous internal state into tokens before transmission. This compression discards continuous information from the latent representation that discrete tokens alone do not capture \citep{zou2025latentcollaborationmultiagentsystems, du2025interlat}, impairing inter-agent coordination \citep{cemri2025multi}. The receiver maps this token sequence back to vectors through its own embedding layer, recovering only token identities rather than the sender's original hidden states. This results in information loss and motivates a fundamental question: \textit{can off-the-shelf LLM agents bypass natural language and effectively communicate through continuous representations?}

An intuitive strategy is to transmit hidden states directly. This is viable when all agents share the same pretrained LLM and thus the same hidden dimensionality \citep{zheng2025thought,zou2025latentcollaborationmultiagentsystems}. However, matching dimensionality alone is not sufficient. Pretrained LLMs are trained to read token embeddings at the input layer, whereas decoder hidden states occupy a different region of the representation space. Passing hidden states directly therefore produces latent vectors with the right dimensionality to receivers but lacks the geometric alignment required for the model to interpret them, causing semantic mismatch during  communication.

Prior work addresses this mismatch in two ways. Key-value (KV)-cache transfer methods \citep{fu2025cache, zou2025latentcollaborationmultiagentsystems} avoid the input-layer mismatch by injecting internal states across all transformer layers.  However, they transfer internal processing states rather than a compact representation of the complete message, which may potentially cause information loss. Learned latent-communication methods \citep{du2025interlat, zheng2025thought} bridge the gap through trained projectors, but the projectors tie the method to the specific model and tasks they were trained on and require re-training for different models. To the best of our knowledge, it has yet to be explored whether this mismatch can be resolved through alignment
alone, without training or architectural modification.

\begin{figure*}[t]
\centering
\includegraphics[width=0.7\textwidth]{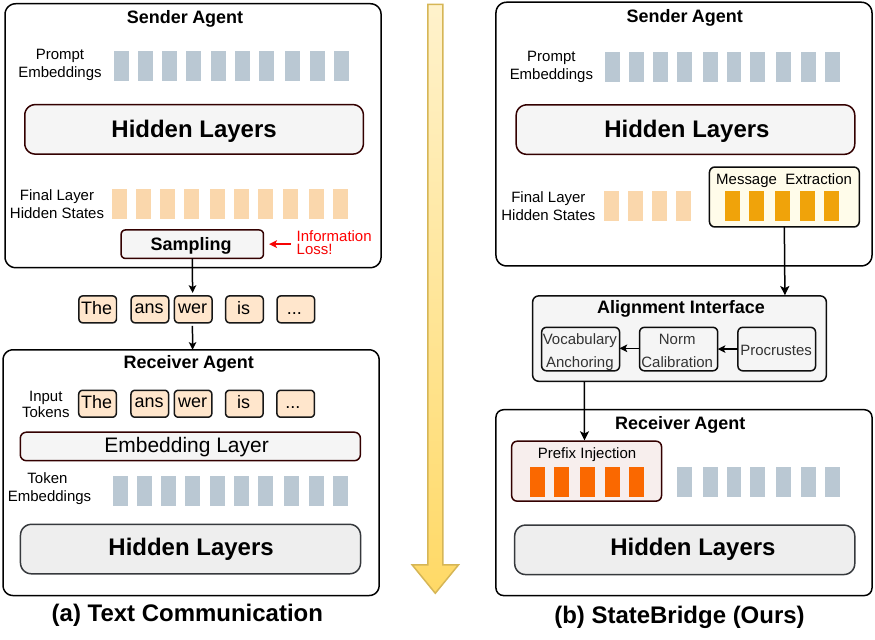}
\caption{\textbf{Overview of StateBridge.} Standard text communication (a) transforms the sender's message to discrete token identities via sampling, which causes information loss, whereas StateBridge (b) extracts message hidden states from the sender, aligns them with the input embedding space, and prepends the resulting prefix to the receiver's prompt.}
\label{fig:main}
\end{figure*}

We propose \textbf{StateBridge}, a training-free communication interface for MA systems (illustrated in Figure~\ref{fig:main}). We extract hidden states from the sender's generated message and align them with the receiver's input embedding space through a closed-form transformation. The aligned states are then injected as a continuous prefix for the receiver agent. Unlike text, it preserves a richer continuous representation from the sender. Compared to raw hidden-state transfer, it also makes such representation compatible with the receiver's input space. The central idea is that latent communication helps only when the transmitted states are both informative and compatible with the receiver's input space.

We evaluate StateBridge in a four-agent sequential pipeline on mathematical reasoning, code generation, and question answering benchmarks. Across four models spanning two families, StateBridge consistently outperforms text-based communication and KV-cache transfer methods. It achieves the best or tied-best score on 22 out of 26 model–task pairs without model updates and larger gains on challenging benchmarks. By operating only at the input embedding layer, StateBridge also applies more broadly across model families than methods that inject states across all transformer layers. A case study further confirms that the aligned prefix carries semantic information beyond the suffix tokens alone.

\paragraph{Contributions.} (1) We show that closed-form alignment alone resolves the representation mismatch between final-layer hidden states and input embeddings, enabling training-free and effective latent communication between off-the-shelf LLM agents. (2) We propose StateBridge, combining Procrustes alignment, norm calibration, and vocabulary anchoring into a closed-form alignment interface with no learnable parameters. (3) Across four models from two families, StateBridge outperforms both existing text-based and KV-cache transfer baselines. Ablations confirm that each component contributes, with geometry preservation providing the largest effect.

\section{Related work}
\label{sec:related_work}

\textbf{LLM MA systems.}
LLM MA systems extend classical multi-agent coordination \citep{park2023generative, yang2024llm} to modern language model settings, enabling autonomous agents to collaborate on reasoning, planning, and problem-solving \citep{tao2024magis, wang2025talk, zhao2025sirius, fang2025comprehensivesurveyselfevolvingai}. Early methods such as AutoGen \citep{wu2024autogen} and CAMEL \citep{li2023camel} coordinate multiple LLMs through explicit dialogue or role assignment. Subsequent work introduces structured communication protocols \citep{chen2025optima, yan2025beyond} and agent specialization \citep{mieczkowski2025predicting, fourney2024magentic}. These systems have been applied to math and science reasoning \citep{liang2024encouraging,yue2024clinicalagent}, open-domain question answering \citep{fourney2024magentic, wu2025talk}, and GUI interaction \citep{zhang2024large, ye2025mobile}. However, most existing methods communicate through text, which discards hidden state information \citep{du2025interlat}, introduces redundancy for linguistic coherence \citep{zhang2024cut}, and can impair inter-agent coordination \citep{cemri2025multi}. On the contrary, agents in our model communicate in the latent space, mitigating these problems.

\textbf{Latent communication in MA systems.}
Recent work explores latent-space communication as an alternative to text. Early work such as CIPHER \citep{pham2023cipher} transmits weighted average of vocabulary embeddings to avoid token sampling, but still projects onto the discrete vocabulary space rather than preserving hidden states. KV-cache methods such as Cache-to-Cache \citep{fu2025cache} and LatentMAS \citep{zou2025latentcollaborationmultiagentsystems}, which uses a training-free linear alignment to enable latent reasoning, transfer working memory across agents, but transmit internal processing state rather than communication content. Embedding-based methods such as Interlat \citep{du2025interlat} and ThoughtComm \citep{zheng2025thought} transmit hidden states directly, but require trained modules, limiting their applicability to the specific model and projector they were trained on. In contrast, our StateBridge transmits generated-message hidden states and enables effective message passing through training-free alignment to the receiver's input embedding space, offering effective content communication and wide generalizability.

\section{Method}
\label{sec:method}

\subsection{Problem statement}
\label{sec:problem}

A homogeneous MA system is defined by all agents sharing the same pretrained LLM $\mathcal{M}$ \citep{zhang2024chain, zou2025latentcollaborationmultiagentsystems}. $\mathcal{M}$ has vocabulary size $V$ and embedding dimension $d$, with the token embedding matrix defined as $\mathbf{W}_{\mathrm{emb}} \in \mathbb{R}^{V \times d}$. We abstract the communication channels within a MA system consisting of two primary entities: a sender $\mathcal{A}_s$ and a receiver $\mathcal{A}_r$. 
A message generated by the sender $\mathcal{A}_s$ is characterized by (1) the final-layer hidden states ($\mathbf{S}$) and (2) the embeddings of the corresponding decoded tokens in the shared $\mathbf{W}_{\mathrm{emb}}$, which we term the reference embeddings $\mathbf{R}$.

Sending only $\mathbf{R}$ reduces the message to discrete text, discarding continuous information that token identities alone do not capture (see detailed explanation in Appendix~\ref{app:theory_info}). Conversely, sending raw $\mathbf{S}$ preserves more information, but it is not aligned with the receiver's input embedding space, as the receiver is pretrained to consume token embeddings rather than last-layer hidden states.
Our goal is to produce an \emph{aligned prefix} $\bar{\mathbf{S}}$ that satisfies two requirements: (1) it preserves the pairwise geometric relationships among the sender's hidden states; and (2) it is compatible with the receiver's input embedding space, enabling its process without modification. StateBridge satisfies these two requirements as follows.

\subsection{Message extraction}

In autoregressive models, subsequent hidden states aggregate information from the preceding context. To leverage this, we retain only the final $K$ hidden states of the tokens of the generated message (e.g., $K=64$). These states are indexed from $1$ to $K$ and denoted by
\begin{equation}
\mathbf{S} = (\mathbf{s}_1, \ldots, \mathbf{s}_K)^\top \in \mathbb{R}^{K \times d},
\end{equation}
where $\mathbf{s}_i \in \mathbb{R}^d$ is the final-layer hidden state at position $i$. For models like Qwen3 \citep{yang2025qwen3} that generate intermediate reasoning (e.g., chain-of-thought delimited by $\langle$think$\rangle$ and $\langle$/think$\rangle$ tokens before the response), we discard hidden states from this section and retain the segment intended for the receiver. Let the resulting message tokens of the receiver be $\mathbf{y} = (y_1, \ldots, y_K)$. Looking up these tokens in the token embedding matrix gives
\begin{equation}
\mathbf{R} = \mathbf{W}_{\mathrm{emb}}[\mathbf{y}] \in \mathbb{R}^{K \times d}.
\end{equation}
The reference embeddings $\mathbf{R}$ serve as the optimization target for aligning the continuous message. The discrete tokens $\mathbf{y}$ are not transmitted between agents.

\begin{wrapfigure}[26]{r}{0.4\columnwidth}
\centering
\vspace{-12pt}
\includegraphics[width=0.38\columnwidth]{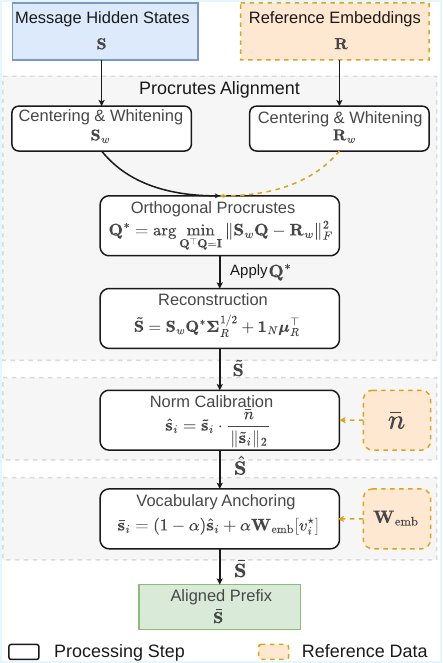}
\caption{The StateBridge alignment interface transforms message hidden states $\mathbf{S}$ into the aligned prefix $\bar{\mathbf{S}}$ for receiver using reference embeddings $\mathbf{R}$.}
\label{fig:workflow}
\vspace{-10pt}
\end{wrapfigure}

\subsection{Alignment interface}

The message hidden states $\mathbf{S}$ and the reference embeddings $\mathbf{R}$ describe the same generated message but occupy different regions of the representation space. The transformation must bridge this gap \citep{cao2025progressivedepthupscalingoptimal}. Since geometric proximity in hidden-state space encodes semantic similarity \citep{ethayarajh2019contextual}, the alignment transformation should therefore preserve pairwise geometric relationships rather than minimize pointwise reconstruction error. StateBridge achieves this through a three-step alignment interface illustrated in Figure~\ref{fig:workflow}: Procrustes alignment, norm calibration, and vocabulary anchoring.

\paragraph{Procrustes alignment.} 
We treat the message hidden states $\mathbf{S}$ and the reference embeddings $\mathbf{R}$ as two point clouds in $\mathbb{R}^d$. Both describe the same generated message, but in different coordinate systems. $\mathbf{S}$ carries the sender's richer internal representation, while $\mathbf{R}$ lies in the space the receiver can read. We align $\mathbf{S}$ to $\mathbf{R}$ using the orthogonal Procrustes method \citep{schonemann1966generalized}, which finds the rotation that best maps $\mathbf{S}$ onto $\mathbf{R}$ while preserving pairwise distances and angles. 

We first remove the global offset of each set by centering:
\begin{equation}
\mathbf{S}_c = \mathbf{S} - \mathbf{1}_K \boldsymbol{\mu}_S^\top,
\qquad
\mathbf{R}_c = \mathbf{R} - \mathbf{1}_K \boldsymbol{\mu}_R^\top,
\end{equation}
where $\mathbf{1}_K \in \mathbb{R}^K$ is the all-ones vector, $\boldsymbol{\mu}_S = \frac{1}{K}\sum_{i=1}^{K}\mathbf{s}_i$ is the row mean of $\mathbf{S} $, and $\boldsymbol{\mu}_R$ is defined analogously for $\mathbf{R}$. Centering restricts the alignment to the relative structure of the points rather than their absolute position difference in space.

We then whiten both sets:
\begin{equation}
\boldsymbol{\Sigma}_S = \frac{1}{K}\mathbf{S}_c^\top \mathbf{S}_c + \lambda \mathbf{I},
\qquad
\boldsymbol{\Sigma}_R = \frac{1}{K}\mathbf{R}_c^\top \mathbf{R}_c + \lambda \mathbf{I},
\end{equation}
\begin{equation}
\mathbf{S}_w = \mathbf{S}_c \boldsymbol{\Sigma}_S^{-1/2},
\qquad
\mathbf{R}_w = \mathbf{R}_c \boldsymbol{\Sigma}_R^{-1/2},
\end{equation}
where $\lambda > 0$ is a small regularization constant, and $\mathbf{I}$ is the identity matrix. Whitening rescales the principal directions of each set so that a few high-variance directions do not dominate the alignment. Centering and whitening together remove the global offset and equalize the principal directions. The remaining mismatch between the two point clouds is then a rotation.

We then solve an orthogonal Procrustes problem:
\begin{equation}
\mathbf{Q}^*
=
\arg\min_{\mathbf{Q}}
\left\|
\mathbf{S}_w \mathbf{Q} - \mathbf{R}_w
\right\|_F^2\quad\text{s.t.}\quad \mathbf{Q}^\top \mathbf{Q} = \mathbf{I},
\end{equation}
where $\|\cdot\|_F$ denotes the Frobenius norm. This objective seeks the rotation that best aligns the whitened sender states to the whitened reference embeddings without introducing arbitrary shearing. The solution is closed-form. Let $\mathbf{S}_w^\top \mathbf{R}_w = \mathbf{U}\mathbf{D}\mathbf{V}^\top$ be the singular value decomposition (SVD) \citep{golub2013matrix} of the cross-correlation matrix. The optimal rotation is then $\mathbf{Q}^* = \mathbf{U}\mathbf{V}^\top$. Since $\mathbf{Q}^*$ is orthogonal, it preserves pairwise distances and angles in the whitened space. We provide a formal proof in Appendix~\ref{app:theory_procrustes}.

Finally, we restore the overall scale pattern and location of the reference embeddings:
\begin{equation}
\tilde{\mathbf{S}}
=
\mathbf{S}_w \mathbf{Q}^* \boldsymbol{\Sigma}_R^{1/2}
+
\mathbf{1}_K \boldsymbol{\mu}_R^\top.
\end{equation}
This Procrustes alignment ensures that the resulting matrix $\tilde{\mathbf{S}}$ remains a continuous representation, derived from the sender's hidden states, while being closer to the input geometry required by the receiver for effective communication.

\paragraph{Norm calibration and vocabulary anchoring.}

The Procrustes alignment step matches the orientation of the message to the input embedding space, but two practical issues remain: (1) Final-layer hidden states have much larger norms than token embeddings. On Qwen3-4B \citep{yang2025qwen3}, the average hidden-state norm is approximately 140$\times$ that of the input embeddings. This gap arises from the accumulation of residual updates across layers and the next-token prediction objective. The norm gap persists after alignment and changes how strongly the aligned vectors interact with the receiver's attention layers. (2) Aligned vectors $\tilde{\mathbf{S}}$ could still lie far from regions occupied by actual vocabulary embeddings. We address these issues by applying a two-step adjustment to each aligned vector.

First, we calibrate each aligned vector to the typical norm of the vocabulary embeddings. Let $\bar{n} = \frac{1}{V}\sum_{v=1}^{V} \|\mathbf{W}_{\mathrm{emb}}[v]\|_2$ be the average $\ell_2$ norm of the vocabulary embeddings. For each row $\tilde{\mathbf{s}}_i$ of $\tilde{\mathbf{S}}$, we compute
\begin{equation}
\hat{\mathbf{s}}_i
=
\tilde{\mathbf{s}}_i
\cdot
\frac{\bar{n}}{\|\tilde{\mathbf{s}}_i\|_2}.
\end{equation}
This step places the prefix on a comparable norm scale to the ordinary input token embeddings. Without it, the larger prefix norms would dominate the dot-product attention scores, causing the receiver to attend to the prefix regardless of semantic relevance.

Second, we move each calibrated vector slightly toward its nearest vocabulary embedding under cosine similarity:
\begin{equation}
v_i^*
=
\arg\max_{v \in \{1,\ldots,V\}}
\frac{
\hat{\mathbf{s}}_i^\top \mathbf{W}_{\mathrm{emb}}[v]
}{
\|\hat{\mathbf{s}}_i\|_2 \, \|\mathbf{W}_{\mathrm{emb}}[v]\|_2
},
\qquad
\bar{\mathbf{s}}_i
=
(1-\alpha)\hat{\mathbf{s}}_i + \alpha \mathbf{W}_{\mathrm{emb}}[v_i^*],
\end{equation}
where $\alpha \in [0,1]$. This vocabulary anchoring step does not snap the message hidden states to discrete tokens. Instead, it keeps the representation continuous while moving it closer to regions of the input embedding space encountered during pretraining. While Procrustes alignment preserves the geometric structure of the message hidden states, these two steps ensure that the prefix is also compatible with the receiver's input embedding space. Stacking all rows $\bar{\mathbf{s}}_i$ gives the final aligned prefix $\bar{\mathbf{S}} \in \mathbb{R}^{K \times d}$.

\subsection{Prefix injection}

Let $\mathbf{P} \in \mathbb{R}^{N \times d}$ denote the receiver's prompt embeddings, where $N$ is the number of prompt tokens. StateBridge prepends the aligned prefix to the prompt, forming the combined input $\mathbf{X} = [\bar{\mathbf{S}}; \mathbf{P}] \in \mathbb{R}^{(K+N) \times d}$. StateBridge injects the aligned prefix directly at the embedding layer, bypassing tokenization. We assign position indices sequentially over the concatenated input, so the prefix occupies positions $0$ through $K{-}1$, and the prompt follows from position $K$ onward. The receiver's attention and position-encoding mechanisms therefore treat the prefix identically to ordinary token embeddings, requiring no special handling. The receiver processes $\mathbf{X}$ with the standard  language modeling layers, without any architectural modification. StateBridge therefore changes only the communication interface between agents. The sender transmits an aligned continuous prefix instead of text, while the underlying LLM remains unchanged.

\subsection{Computational cost}

\textbf{Time complexity.} The alignment is dominated by two operations: the whitening eigendecomposition, which costs $O(d^3)$, and the vocabulary anchoring search, which costs $O(KVd)$. Both computed once per batch. The remaining operations (centering, rotation, norm calibration) are $O(Kd^2)$ or lower. For comparison, let $T$ denote the number of generated tokens and $L$ the number of transformer layers. A single autoregressive pass costs $O(TLd^2)$. For typical configurations ($T \geq 256$, $L = 32$), alignment is cheaper than one generation pass.

\textbf{Space complexity.} Naively storing every layer's hidden states during generation costs $O(TLd)$. We instead register a lightweight callback (forward hook) on the final transformer layer. It records only that layer's output at each decoding step and discards the rest, reducing the extraction cost to $O(Td)$. Only the last $K$ states are retained, giving $O(Kd)$. The alignment adds $O(d^2)$ for covariance and SVD factors. Both are negligible relative to the model parameters. By contrast, KV-cache transfer methods store $O(TLd)$ per agent \citep{fu2025cache, zou2025latentcollaborationmultiagentsystems}, incurring substantially larger memory overhead.

\section{Experimental setup}
\label{sec:exp_setup}

We follow \citet{zou2025latentcollaborationmultiagentsystems} and use a standard four-agent pipeline across reasoning, question answering, and code generation tasks.

\textbf{Tasks and datasets.}
We use eight benchmarks from three task categories: mathematical reasoning (GSM8K \citep{gsm8k}, AIME24 \citep{aime24}, and AIME25 \citep{aime25}), question answering (GPQA-Diamond \citep{gpqa}, ARC-Challenge \citep{arc-challenge}, and MedQA \citep{medqa}), and code generation (MBPP+ and HumanEval+ \citep{codeplus}). This selection covers tasks that differ in reasoning depth, knowledge demand, and output precision. Detailed dataset descriptions are provided in Appendix~\ref{app:datasets}.

\textbf{Models.}
We test two model families: Qwen3 (4B, 8B, and 32B) \citep{yang2025qwen3} and OLMo3-7B-Think \citep{olmo2025olmo3}. Within each run, all agents share the same model weights. We evaluate all models on five core benchmarks. Following \citet{zou2025latentcollaborationmultiagentsystems}, we additionally report AIME24, AIME25, and GPQA-Diamond for the stronger Qwen3-8B and Qwen3-32B models.

\textbf{Baselines.}
We compare StateBridge against three baselines: (i) \textit{Single}, standard single-agent generation; (ii) \textit{TextMAS}, text-based sequential communication \citep{zhang2024chain}; and (iii) \textit{LatentMAS} \citep{zou2025latentcollaborationmultiagentsystems}, which injects the sender's KV cache into the receiver. All methods share the same four-agents (Planner, Critic, Refiner, Judger), identical hyperparameters, and the same evaluation protocol. The only difference across prompts is the communication modality description. Appendix~\ref{app:agent_pipeline} describes the agent pipeline and Appendix~\ref{app:prompts} lists the full prompt templates.

\textbf{Implementation details.}
We set $K = 64$, $\alpha = 0.3$, and $\lambda = 10^{-3}$. Following \citet{zou2025latentcollaborationmultiagentsystems}, all agents use temperature 0.6 and top-$p$ 0.95. Maximum output lengths range from 2,048 to 20,000 tokens depending on the task (Appendix~\ref{app:impl_details}). All experiments were run on 2 NVIDIA A100-80G GPUs. The evaluation protocol for each task category is described in Appendix~\ref{app:eval}.

\section{Results}
\label{sec:results}

\begin{table*}[t!]
\centering
\scriptsize
\renewcommand{\arraystretch}{1.05}

\newcolumntype{C}{>{\centering\arraybackslash}X}
\newcolumntype{D}{>{\centering\arraybackslash}p{3.2em}}

\begin{tabularx}{\textwidth}{l CCCCD CCCCD}
\toprule
& \multicolumn{5}{c}{\textbf{Qwen3-4B}} & \multicolumn{5}{c}{\textbf{OLMo3-7B-Think}} \\
\cmidrule(lr){2-6} \cmidrule(lr){7-11}
\textbf{Task} & Single & Text & Latent & \textbf{SB} & $\Delta$ & Single & Text & Latent & \textbf{SB} & $\Delta$ \\
\midrule
ARC-C & 89.2 & 90.0 & 92.3 & \textbf{93.7} & \textcolor{ForestGreen}{$\uparrow$1.4} & 84.6 & 82.2 & 78.3 & \textbf{89.6} & \textcolor{ForestGreen}{$\uparrow$5.0} \\
MedQA & 47.7 & 65.3 & 66.3 & \textbf{70.3} & \textcolor{ForestGreen}{$\uparrow$4.0} & 50.7 & 54.0 & 44.3 & \textbf{59.0} & \textcolor{ForestGreen}{$\uparrow$5.0} \\
GSM8K & 82.4 & \textbf{89.8} & 88.2 & \textbf{89.8} & 0.0 & 85.7 & 85.7 & 67.5 & \textbf{86.5} & \textcolor{ForestGreen}{$\uparrow$0.8} \\
MBPP+ & 63.5 & 69.8 & 73.5 & \textbf{75.9} & \textcolor{ForestGreen}{$\uparrow$2.4} & 66.7 & 66.9 & 42.3 & \textbf{69.1} & \textcolor{ForestGreen}{$\uparrow$2.2} \\
HumanEval+ & 75.0 & 79.7 & 79.9 & \textbf{82.3} & \textcolor{ForestGreen}{$\uparrow$2.4} & 71.3 & \textbf{80.5} & 43.3 & 79.3 & \textcolor{BrickRed}{$\downarrow$1.2} \\
\midrule
\rowcolor{avgrow} \textbf{Avg.} & 71.6 & 78.9 & 80.0 & \textbf{82.4} & \textcolor{ForestGreen}{\textbf{$\uparrow$2.4}} & 71.8 & 73.9 & 55.1 & \textbf{76.7} & \textcolor{ForestGreen}{\textbf{$\uparrow$2.8}} \\
\midrule
\midrule
& \multicolumn{5}{c}{\textbf{Qwen3-8B}} & \multicolumn{5}{c}{\textbf{Qwen3-32B}} \\
\cmidrule(lr){2-6} \cmidrule(lr){7-11}
\textbf{Task} & Single & Text & Latent & \textbf{SB} & $\Delta$ & Single & Text & Latent & \textbf{SB} & $\Delta$ \\
\midrule
ARC-C & 91.0 & 94.6 & 94.4 & \textbf{95.1} & \textcolor{ForestGreen}{$\uparrow$0.5} & 95.5 & 93.9 & \textbf{95.7} & 94.5 & \textcolor{BrickRed}{$\downarrow$1.2} \\
MedQA & 53.0 & 75.0 & 75.3 & \textbf{78.3} & \textcolor{ForestGreen}{$\uparrow$3.0} & 86.0 & 77.0 & 84.3 & \textbf{87.3} & \textcolor{ForestGreen}{$\uparrow$1.3} \\
GSM8K & 81.1 & 92.3 & \textbf{93.8} & 91.2 & \textcolor{BrickRed}{$\downarrow$2.6} & 91.1 & 92.4 & \textbf{92.7} & 90.2 & \textcolor{BrickRed}{$\downarrow$2.5} \\
MBPP+ & 64.8 & 69.5 & 74.6 & \textbf{76.9} & \textcolor{ForestGreen}{$\uparrow$2.3} & 74.3 & 75.1 & 74.1 & \textbf{76.2} & \textcolor{ForestGreen}{$\uparrow$1.1} \\
HumanEval+ & 74.4 & 80.5 & 80.5 & \textbf{83.6} & \textcolor{ForestGreen}{$\uparrow$3.1} & 78.7 & 84.8 & 84.2 & \textbf{85.4} & \textcolor{ForestGreen}{$\uparrow$0.6} \\
\cmidrule{1-6} \cmidrule{7-11}
AIME24 & 50.0 & 53.3 & 56.7 & \textbf{63.3} & \textcolor{ForestGreen}{$\uparrow$6.6} & 70.0 & 73.3 & 66.7 & \textbf{76.7} & \textcolor{ForestGreen}{$\uparrow$3.4} \\
AIME25 & 46.7 & \textbf{53.3} & \textbf{53.3} & \textbf{53.3} & 0.0 & 66.7 & 70.0 & 56.7 & \textbf{73.3} & \textcolor{ForestGreen}{$\uparrow$3.3} \\
GPQA & 39.9 & 43.4 & 45.5 & \textbf{52.5} & \textcolor{ForestGreen}{$\uparrow$7.0} & 52.5 & 58.3 & 57.1 & \textbf{64.1} & \textcolor{ForestGreen}{$\uparrow$5.8} \\
\midrule
\rowcolor{avgrow} \textbf{Avg.} & 62.6 & 70.2 & 71.8 & \textbf{74.3} & \textcolor{ForestGreen}{\textbf{$\uparrow$2.5}} & 76.9 & 78.1 & 76.4 & \textbf{81.0} & \textcolor{ForestGreen}{\textbf{$\uparrow$2.9}} \\
\bottomrule
\end{tabularx}
\caption{Main results across model families. Accuracy (\%) on QA and math tasks, pass@1 (\%) on code tasks. \textbf{Bold}: best or tied-best per task. $\Delta$: StateBridge vs.\ the best baseline (\textcolor{ForestGreen}{green} = gain, \textcolor{BrickRed}{red} = loss). Text = TextMAS, Latent = LatentMAS, SB = StateBridge (Ours).}
\label{tab:main}
\end{table*}


Table~\ref{tab:main} summarizes the results across MA communication methods, models, and tasks. We report accuracy for question answering and mathematical reasoning and pass@1 \citep{chen2021evaluating} for code generation. StateBridge achieves the best average score in all four model settings, improving over the best baseline by 2.4 to 2.9 points. It also wins or ties on 22 out of 26 model--task pairs. These gains hold across both model families.

Switching from Qwen3 to OLMo3 tests whether each latent method is sensitive to the LLM architecture. On OLMo3-7B-Think, LatentMAS averages only 55.1\%, far below the 73.9\% of TextMAS. StateBridge reaches 76.7\%. Because the agent pipeline, prompts, and decoding settings are otherwise unchanged, this gap points to the communication channel. KV-cache transfer injects states across each transformer layers. If the layer structure differs between LLMs, compatibility breaks. StateBridge operates only at the input embedding layer and the use of such a uniform interface ensures a greater applicability across model families.


StateBridge helps most on the more challenging benchmarks, including MedQA, GPQA, AIME24/25, and the code generation tasks. StateBridge's message hidden states encode distributional information beyond the sampled tokens, including confidence signals and traces of alternative reasoning paths. Text-based communication collapses this to a single token sequence, leading to information loss and incomplete message communication.


Not all tasks show gains. On GSM8K for Qwen3 models, StateBridge trails the best baseline, likely because continuous prefixes affect output formatting under exact-match evaluation \citep{li2021prefix,tam2024speak}.

\section{Analysis}
\label{sec:analysis}
\subsection{Ablation study}

Table~\ref{tab:ablation} isolates the contribution of each StateBridge component.

\textbf{Orthogonal vs.\ linear alignment.} A natural alternative to Procrustes is to fit a linear map via ridge regression, minimizing $|\mathbf{SW} - \mathbf{R}|^2 + \lambda|\mathbf{W}|^2$. This map is not constrained to preserve distances or angles, so it achieves low pointwise reconstruction error. However, it distorts the pairwise geometry among the sender states. 
Replacing Procrustes with ridge regression results to average performance drops from 82.4\% to 74.9\%, with especially larger drops on MBPP+ and HumanEval+. Since pairwise geometry encodes semantic similarity, distorting it discards the richer information that latent communication carries over text. The effect is strongest on code generation, which demands structurally precise output. Orthogonal alignment preserves these semantic similarity and maintains the precision of the receiver's generation. Appendices~\ref{app:theory_ridge} and~\ref{app:theory_procrustes} provide a geometric explanation for this gap.

\textbf{Compatibility steps.} The two compatibility steps also matter. Removing norm calibration lowers the average to 79.5\%, and removing vocabulary anchoring lowers it to 80.2\%. This suggests that alignment alone is not enough. The prefix also needs the right norm scale and must stay close to embedding regions seen during pretraining. Replacing the aligned prefix with random noise drops the average to 48.8\%. This rules out the possibility that gains come from simply prepending extra continuous vectors.

\begin{table*}[t!]
\centering
\scriptsize
\setlength{\tabcolsep}{4pt}
\renewcommand{\arraystretch}{1.0}

\newcolumntype{C}{>{\centering\arraybackslash}X}
\begin{tabularx}{\textwidth}{l C C C C C C l}
\toprule
\textbf{Variant} & ARC-C & MedQA & GSM8K & MBPP+ & HE+ & Avg. & $\Delta$ \\
\midrule
\rowcolor{avgrow} Full StateBridge & \textbf{93.7} & \textbf{70.3} & \textbf{89.8} & \textbf{75.9} & \textbf{82.3} & \textbf{82.4} & \multicolumn{1}{c}{--} \\
Ridge Regr. & 93.0 & 68.0 & 88.6 & 61.5 & 63.6 & 74.9 & \textcolor{BrickRed}{$\downarrow$7.5} \\
\midrule
Random Noise & 32.2 & 30.7 & 84.5 & 48.2 & 48.2 & 48.8 & \textcolor{BrickRed}{$\downarrow$33.6} \\
w/o Norm Calib. & 90.9 & 68.0 & 87.1 & 71.7 & 79.9 & 79.5 & \textcolor{BrickRed}{$\downarrow$2.9} \\
w/o Vocab.\ Anchor. & 91.7 & 66.7 & 88.9 & 73.0 & 80.5 & 80.2 & \textcolor{BrickRed}{$\downarrow$2.2} \\
\bottomrule
\end{tabularx}
\caption{Ablation on Qwen3-4B. Each row removes or replaces one component. HE+: HumanEval+.}
\label{tab:ablation}
\end{table*}

\begin{table*}[t]
\centering
\scriptsize
\setlength{\tabcolsep}{3pt}
\renewcommand{\arraystretch}{1.0}

\newcolumntype{C}{>{\centering\arraybackslash}X}

\begin{tabularx}{\textwidth}{l C C C C c C C C C C C}
\toprule
& \multicolumn{4}{c}{\textbf{Prefix length} $K$} && \multicolumn{6}{c}{\textbf{Anchoring coefficient} $\alpha$} \\
\cmidrule(lr){2-5} \cmidrule(lr){7-12}
\textbf{Task} & 16 & 32 & \cellcolor{avgrow}\textbf{64} & 128 && 0.0 & 0.1 & 0.2 & \cellcolor{avgrow}\textbf{0.3} & 0.4 & 0.6 \\
\midrule
ARC-C & 93.2 & 93.3 & \cellcolor{avgrow}\textbf{93.7} & 91.7 && 91.7 & 93.5 & 93.2 & \cellcolor{avgrow}\textbf{93.7} & 92.5 & 93.3 \\
MedQA & 68.0 & 68.7 & \cellcolor{avgrow}\textbf{70.3} & 68.7 && 66.7 & 64.0 & \textbf{71.3} & \cellcolor{avgrow}70.3 & 69.7 & 66.3 \\
GSM8K & 87.6 & 87.9 & \cellcolor{avgrow}\textbf{89.8} & 88.6 && 88.9 & 89.1 & 88.7 & \cellcolor{avgrow}\textbf{89.8} & 88.0 & 89.0 \\
MBPP+ & 74.6 & \textbf{75.9} & \cellcolor{avgrow}\textbf{75.9} & 74.9 && 73.0 & 73.3 & 69.3 & \cellcolor{avgrow}\textbf{75.9} & 72.2 & 72.0 \\
HumanEval+ & \textbf{82.3} & 81.7 & \cellcolor{avgrow}\textbf{82.3} & 79.3 && 80.5 & 78.1 & 81.7 & \cellcolor{avgrow}82.3 & \textbf{83.5} & 82.9 \\
\bottomrule
\end{tabularx}
\caption{Sensitivity to prefix length $K$ and anchoring coefficient $\alpha$ on Qwen3-4B. Moderate values work best. Default values ($K{=}64$, $\alpha{=}0.3$) are shaded.}
\label{tab:hyperparam}
\end{table*}

\subsection{Hyperparameter sensitivity}
\label{sec:hyperparam}
For all experiments, we use the best hyperparameters ($K{=}64$, $\alpha{=}0.3$) on MedQA obtained with Qwen3-4B and apply them without modification to all other datasets and models. Table~\ref{tab:hyperparam} reports the sensitivity to prefix length $K$ and anchoring coefficient $\alpha$. Increasing $K$ from 16 to 64 usually improves performance, suggesting that very short messages loss useful information. However, $K{=}128$ hurts on most tasks. We attribute this to the alignment procedure. StateBridge fits a single global rotation to all $K$ states. When \(K\) is small, retained states come from the message end and tend to have similar norms and geometry. Larger \(K\) includes earlier states that may differ in both. The global rotation then has to compromise across a more heterogeneous point set, potentially lowering alignment quality. 

The anchoring coefficient $\alpha$ controls a trade-off between informativeness and compatibility. A smaller $\alpha$ preserves more of the aligned hidden state but keeps the prefix further from the receiver's input embedding space. A larger $\alpha$ moves the prefix closer to actual reference embeddings but discards more continuous information.
$\alpha{=}0.3$ strikes a consistent balance across all benchmarks, so we adopt it as a fixed default.

\subsection{Alignment visualization}

Figure~\ref{fig:alignment_viz} shows the gap between message hidden states and reference embeddings, and how StateBridge closes it. States are collected from 300 MedQA queries. Before alignment, the message hidden states occupy a different region from the reference embeddings. After alignment, the prefix moves much closer to the input embedding space. The same pattern appears for both Qwen3-4B and Qwen3-8B, suggesting that the effect is stable across model scales.

\begin{figure*}[t!]
\centering
\includegraphics[width=0.85\textwidth]{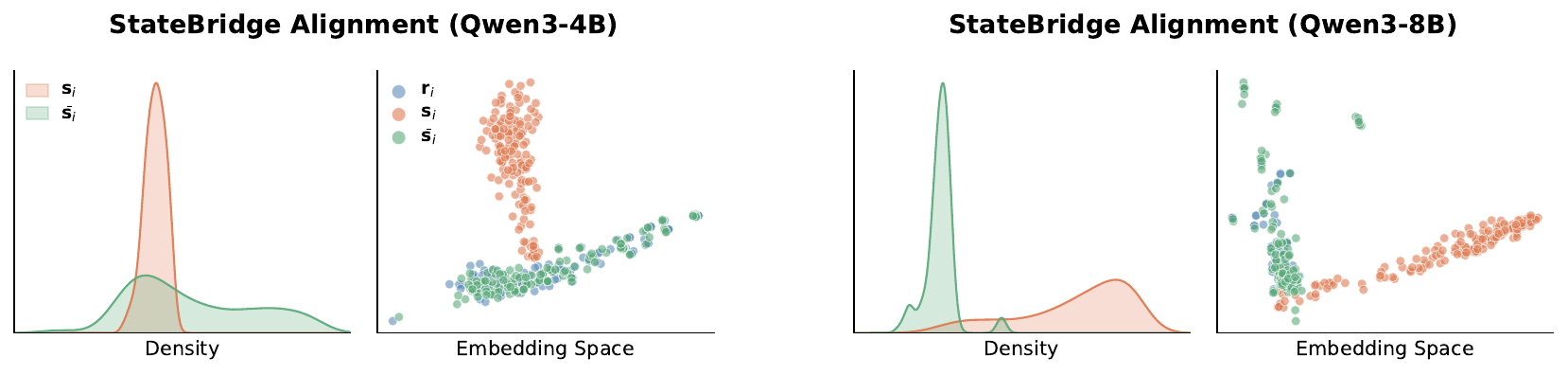}
\caption{Principal component analysis (PCA) density (left) and PCA scatter of the embedding space (right) for Qwen3-4B and Qwen3-8B. Orange: message hidden states $\mathbf{s}_i$; blue: reference embeddings $\mathbf{r}_i$; green: aligned prefix $\bar{\mathbf{s}}_i$.}
\label{fig:alignment_viz}
\end{figure*}

\subsection{Case study: plan recovery from aligned prefixes}

\begin{table*}[!t]
\centering
\scriptsize
\setlength{\tabcolsep}{4pt}
\renewcommand{\arraystretch}{1.4}
\begin{tabular*}{\textwidth}{@{\extracolsep{\fill}}p{0.09\textwidth}@{\quad} p{0.28\textwidth} p{0.28\textwidth} p{0.28\textwidth}@{}}
\toprule
\textbf{Original Plan} \newline (185\newline tokens)
& \multicolumn{3}{p{0.85\textwidth}}{1.\ Analyze clinical features (fatigue, dysphagia, weight loss, esophageal web, flat nails). 2.\ Evaluate lab results (low hemoglobin, normal WBC/ESR). 3.\ Correlate: Plummer-Vinson syndrome linked to anemia, web, dysphagia; other diagnoses less likely. 4.\ Confirm diagnosis. ...} \\
\midrule
& \multicolumn{1}{c}{$\mathbf{K{=}16}$} & \multicolumn{1}{c}{$\mathbf{K{=}64}$} & \multicolumn{1}{c}{$\mathbf{K{=}128}$} \\
\midrule
\textbf{Suffix\newline Tokens}
& \textit{\ldots al web, and systemic symptoms aligns most strongly with Plummer-Vinson.}
& \textit{\ldots Plummer-Vinson is linked to iron deficiency, esophageal webs, and oral candidiasis, aligning with the patient's clinical picture.}
& \textit{\ldots Low hemoglobin with normal WBC and ESR supports chronic anemia.\ \ldots\ Iron deficiency in Plummer-Vinson is due to strictures or blood loss, aligning with the patient's symptoms.} \\
\midrule
\textbf{Critic} \newline \textbf{Recovery}
& \ldots\ the most likely diagnosis based on the clinical features (anemia, dysphagia, \textbf{koilonychia}, upper esophageal web) \ldots\ The differential diagnoses (e.g., esophageal cancer, achalasia) are less consistent with the presented findings.
& The plan \ldots\ noting \textbf{its association with iron deficiency, esophageal webs, and oral candidiasis}, aligning with \textbf{(fatigue, weight loss, flat nails, and esophageal web)}. It also \textbf{differentiates this from esophageal cancer or achalasia} \ldots
& \ldots\ features like the web or specific malignancy indicators. Confirm associations: Iron deficiency anemia in Plummer-Vinson is often due to strictures or blood loss, aligning with the patient's symptoms. \\
\bottomrule
\end{tabular*}
\caption{Case study on a MedQA example (Qwen3-4B). Suffix Tokens shows the last $K$ decoded tokens; Critic Recovery shows the critic's restatement from the aligned prefix alone. Bold highlights information absent from the visible suffix.}
\label{tab:case_study}
\end{table*}

Table~\ref{tab:case_study} shows a controlled reconstruction example. We ask the critic to restate the planner's original plan using only the aligned prefix received through StateBridge, without access to the original text. We then vary the prefix length $K$. At $K{=}16$, the visible suffix tokens form only a short fragment. Yet the critic recovers the diagnosis, the key clinical features, and the exclusion of alternative diagnoses. The restatement even includes \textit{koilonychia}, a more precise medical term for ``flat nails.'' This term does not appear in the visible suffix tokens. At $K{=}64$, the recovered plan becomes more detailed, and the critic additionally identifies differential diagnoses. At $K{=}128$, the critic copies only a truncated, incomplete plan. The aligned prefix no longer carries continuous information beyond the suffix tokens, consistent with the reduced alignment quality discussed in \cref{sec:hyperparam}.

\section{Conclusion}
\label{sec:conclusion}

We introduced StateBridge, a training-free communication interface that aligns sender hidden states to the receiver's input embedding space. It requires no training and no architectural modification. Across four model settings and eight benchmarks, StateBridge consistently outperforms both text-based and KV-cache baselines. The gains are largest on harder tasks, where a text handoff is most likely to discard useful information. Together with the ablations and case study, these results point to a key insight. The challenge in latent communication is not only sending richer states. It is making those states readable to the next agent. More broadly, the results show that communication interface design matters in homogeneous MA systems. Our current setting assumes a shared model and a fixed prefix length. Future work could extend the method to heterogeneous models, choose the prefix length adaptively, and combine training-free alignment with learned communication modules.

\bibliographystyle{colm2026_conference}
\bibliography{references}

\appendix
\section{Theoretical analysis}
\label{app:theory}

In this section, we analyze the geometric properties of two alignment strategies used in experiments: orthogonal Procrustes and ridge regression. We first formalize the information bottleneck in text communication, then contrast the two approaches at the representation level.

\subsection{Information loss in text communication}
\label{app:theory_info}

Standard text communication maps the sender's hidden states $\mathbf{S} \in \mathbb{R}^{K \times d}$ to a discrete token sequence $\mathbf{y} \in \{1, \ldots, V\}^K$ via sampling. This mapping is many-to-one: distinct hidden-state matrices that produce the same token sequence become indistinguishable to the receiver. We treat $\mathbf{S}$ and $\mathbf{y}$ as random variables induced by the data distribution and decoding randomness. Since $\mathbf{W}_{\mathrm{emb}}[\mathbf{y}]$ is a deterministic function of $\mathbf{y}$, the data processing inequality bounds the mutual information that the inter-agent channel carries:
\begin{equation}
I(\mathbf{S};\, \mathbf{W}_{\mathrm{emb}}[\mathbf{y}]) \le I(\mathbf{S};\, \mathbf{y}) \le H(\mathbf{y}) \le K \log_2 V,
\end{equation}
where the first inequality becomes an equality under injective embedding lookup. When communication is restricted to the retained $K$ sampled tokens, the channel carries at most $K \log_2 V$ bits; for $V \approx 1.5 \times 10^5$ and $K = 64$, this gives roughly $1101$ bits, regardless of how much information the $K \times d$ continuous hidden states contain. Continuous transmission is not subject to the same combinatorial bound, although its effective capacity depends on numerical precision and the receiver's ability to interpret the representation.

\subsection{Confinement to the token-embedding span under ridge alignment}
\label{app:theory_ridge}

We now show that replacing Procrustes with an unconstrained linear map confines the transmitted message to the span of the discrete token embeddings, regardless of the regularization strength. Consider ridge regression alignment, which solves
\begin{equation}
\mathbf{W}_{\mathrm{ridge}} = \arg\min_{\mathbf{W} \in \mathbb{R}^{d \times d}} \|\mathbf{S}\mathbf{W} - \mathbf{R}\|_F^2 + \gamma \|\mathbf{W}\|_F^2,
\end{equation}
with closed-form solution
\begin{equation}
\mathbf{W}_{\mathrm{ridge}} = (\mathbf{S}^\top \mathbf{S} + \gamma \mathbf{I})^{-1} \mathbf{S}^\top \mathbf{R}.
\end{equation}

\begin{proposition}[Span Confinement]
\label{prop:collapse}
The ridge-aligned prefix $\mathbf{S}_{\mathrm{ridge}} = \mathbf{S} \mathbf{W}_{\mathrm{ridge}}$ satisfies
\begin{equation}
\mathbf{S}_{\mathrm{ridge}} = \mathbf{H}_\gamma \mathbf{R},
\end{equation}
where $\mathbf{H}_\gamma = (\mathbf{S}\mathbf{S}^\top + \gamma \mathbf{I})^{-1} \mathbf{S}\mathbf{S}^\top \in \mathbb{R}^{K \times K}$. Consequently, every row of\, $\mathbf{S}_{\mathrm{ridge}}$ is a linear combination of the rows of\, $\mathbf{R}$, and the aligned prefix is confined to $\mathrm{span}(\mathbf{r}_1, \ldots, \mathbf{r}_K)$.
\end{proposition}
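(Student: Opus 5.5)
The plan is to use the push-through identity for regularized inverses. For any $\mathbf{S}\in\mathbb{R}^{K\times d}$ and $\gamma>0$ we have
\begin{equation}
\mathbf{S}^\top(\mathbf{S}\mathbf{S}^\top+\gamma\mathbf{I}_K) = \mathbf{S}^\top\mathbf{S}\mathbf{S}^\top+\gamma\mathbf{S}^\top = (\mathbf{S}^\top\mathbf{S}+\gamma\mathbf{I}_d)\mathbf{S}^\top .
\end{equation}
Both regularized Gram matrices $\mathbf{S}^\top\mathbf{S}+\gamma\mathbf{I}_d$ and $\mathbf{S}\mathbf{S}^\top+\gamma\mathbf{I}_K$ are symmetric positive definite. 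This holds because $\gamma>0$ and the unregularized Gram matrices are positive semidefinite, so both are invertible. Multiplying the identity on the left by $(\mathbf{S}^\top\mathbf{S}+\gamma\mathbf{I}_d)^{-1}$ and on the right by $(\mathbf{S}\mathbf{S}^\top+\gamma\mathbf{I}_K)^{-1}$ gives
\begin{equation}
(\mathbf{S}^\top\mathbf{S}+\gamma\mathbf{I}_d)^{-1}\mathbf{S}^\top = \mathbf{S}^\top(\mathbf{S}\mathbf{S}^\top+\gamma\mathbf{I}_K)^{-1}.
\end{equation}

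Next, substitute this into the closed form of $\mathbf{W}_{\mathrm{ridge}}$. This yields $\mathbf{W}_{\mathrm{ridge}}=\mathbf{S}^\top(\mathbf{S}\mathbf{S}^\top+\gamma\mathbf{I})^{-1}\mathbf{R}$, and hence
\begin{equation}
\mathbf{S}_{\mathrm{ridge}}=\mathbf{S}\mathbf{S}^\top(\mathbf{S}\mathbf{S}^\top+\gamma\mathbf{I})^{-1}\mathbf{R}.
\end{equation}
To reach the stated ordering $\mathbf{H}_\gamma=(\mathbf{S}\mathbf{S}^\top+\gamma\mathbf{I})^{-1}\mathbf{S}\mathbf{S}^\top$, observe that $\mathbf{G}=\mathbf{S}\mathbf{S}^\top$ commutes with $\mathbf{G}+\gamma\mathbf{I}$, and therefore also with its inverse. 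Multiplying $\mathbf{G}(\mathbf{G}+\gamma\mathbf{I})=(\mathbf{G}+\gamma\mathbf{I})\mathbf{G}$ on both sides by the inverse gives $\mathbf{G}(\mathbf{G}+\gamma\mathbf{I})^{-1}=(\mathbf{G}+\gamma\mathbf{I})^{-1}\mathbf{G}$. This establishes $\mathbf{S}_{\mathrm{ridge}}=\mathbf{H}_\gamma\mathbf{R}$.

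The consequence follows directly. Row $i$ of $\mathbf{H}_\gamma\mathbf{R}$ equals $\sum_j(\mathbf{H}_\gamma)_{ij}\mathbf{r}_j$, so every row lies in $\mathrm{span}(\mathbf{r}_1,\ldots,\mathbf{r}_K)$, for every $\gamma>0$. The only step requiring care is the dimension bookkeeping in the push-through identity: the two identity matrices have different sizes, $\mathbf{I}_d$ and $\mathbf{I}_K$, and invertibility needs $\gamma>0$. Beyond that, the argument is routine linear algebra and invokes no earlier result from the paper except the closed form of $\mathbf{W}_{\mathrm{ridge}}$.
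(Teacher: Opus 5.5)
Your proof is correct and takes essentially the paper's route: the push-through identity substituted into the closed form of $\mathbf{W}_{\mathrm{ridge}}$. The paper applies the variant $\mathbf{S}(\mathbf{S}^\top\mathbf{S}+\gamma\mathbf{I})^{-1}=(\mathbf{S}\mathbf{S}^\top+\gamma\mathbf{I})^{-1}\mathbf{S}$, which lands directly on the stated ordering of $\mathbf{H}_\gamma$; your variant $(\mathbf{S}^\top\mathbf{S}+\gamma\mathbf{I}_d)^{-1}\mathbf{S}^\top=\mathbf{S}^\top(\mathbf{S}\mathbf{S}^\top+\gamma\mathbf{I}_K)^{-1}$ needs the extra (valid) commutation step.
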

\begin{proof}
Applying the push-through identity $\mathbf{A}(\mathbf{A}^\top \mathbf{A} + \gamma \mathbf{I})^{-1} = (\mathbf{A}\mathbf{A}^\top + \gamma \mathbf{I})^{-1}\mathbf{A}$ with $\mathbf{A} = \mathbf{S}$:
\begin{equation}
\mathbf{S}_{\mathrm{ridge}}
= \mathbf{S}(\mathbf{S}^\top \mathbf{S} + \gamma \mathbf{I})^{-1} \mathbf{S}^\top \mathbf{R}
= (\mathbf{S}\mathbf{S}^\top + \gamma \mathbf{I})^{-1} \mathbf{S}\mathbf{S}^\top \mathbf{R}
= \mathbf{H}_\gamma \mathbf{R}.
\end{equation}
Since $\mathbf{H}_\gamma \in \mathbb{R}^{K \times K}$, row $i$ of $\mathbf{S}_{\mathrm{ridge}}$ equals $\sum_{j=1}^K (\mathbf{H}_\gamma)_{ij}\, \mathbf{r}_j \in \mathrm{span}(\mathbf{r}_1, \ldots, \mathbf{r}_K)$.
\end{proof}

\begin{corollary}[Limiting Behavior]
\label{cor:degeneration}
When $\mathbf{S}$ has full row rank (which holds generically for $K \le d$ and rows in general position), $\mathbf{H}_\gamma \to \mathbf{I}_K$ as $\gamma \to 0$, so $\mathbf{S}_{\mathrm{ridge}} \to \mathbf{R}$. In the opposite limit, $\mathbf{H}_\gamma \to \mathbf{0}$ as $\gamma \to \infty$, so the prefix vanishes entirely.
\end{corollary}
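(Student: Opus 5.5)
We work directly with the representation $\mathbf{S}_{\mathrm{ridge}} = \mathbf{H}_\gamma \mathbf{R}$ from Proposition~\ref{prop:collapse}. Both limits then reduce to statements about the $K\times K$ matrix $\mathbf{H}_\gamma = (\mathbf{S}\mathbf{S}^\top + \gamma \mathbf{I})^{-1}\mathbf{S}\mathbf{S}^\top$. Since $\mathbf{R}$ is fixed and independent of $\gamma$, any convergence of $\mathbf{H}_\gamma$ (entrywise, or in any norm, because the space is finite-dimensional) transfers to $\mathbf{S}_{\mathrm{ridge}}$. Concretely, $\|\mathbf{H}_\gamma\mathbf{R} - \mathbf{H}_\infty \mathbf{R}\|_F \le \|\mathbf{H}_\gamma - \mathbf{H}_\infty\|_2\,\|\mathbf{R}\|_F$.

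The key step is to diagonalize. The Gram matrix $\mathbf{G} = \mathbf{S}\mathbf{S}^\top$ is symmetric positive semidefinite, so we write $\mathbf{G} = \mathbf{P}\boldsymbol{\Lambda}\mathbf{P}^\top$ with $\mathbf{P}$ orthogonal and $\boldsymbol{\Lambda} = \mathrm{diag}(\sigma_1^2,\ldots,\sigma_K^2)$, where the $\sigma_i$ are the singular values of $\mathbf{S}$. The factors $\mathbf{G}+\gamma\mathbf{I}$ and $\mathbf{G}$ share this eigenbasis, so
\begin{equation}
\mathbf{H}_\gamma = \mathbf{P}\,\mathrm{diag}\!\left(\frac{\sigma_1^2}{\sigma_1^2+\gamma},\ldots,\frac{\sigma_K^2}{\sigma_K^2+\gamma}\right)\mathbf{P}^\top .
\end{equation}
The problem is thereby reduced to the scalar functions $f_i(\gamma) = \sigma_i^2/(\sigma_i^2+\gamma)$. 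These are well defined for $\gamma>0$ even if some $\sigma_i=0$.

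For $\gamma\to 0$ we use the full row rank hypothesis. It gives $\mathrm{rank}(\mathbf{G}) = K$, so every $\sigma_i^2>0$ and hence $f_i(\gamma)\to 1$. Therefore $\mathbf{H}_\gamma \to \mathbf{P}\mathbf{I}\mathbf{P}^\top = \mathbf{I}_K$, with the explicit bound $\|\mathbf{H}_\gamma - \mathbf{I}\|_2 = \gamma/(\sigma_{\min}^2+\gamma)\le \gamma/\sigma_{\min}^2$. This yields $\mathbf{S}_{\mathrm{ridge}}\to\mathbf{R}$.

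For $\gamma\to\infty$ we need no rank assumption, since $0\le f_i(\gamma)\le \sigma_{\max}^2/\gamma \to 0$. This gives $\|\mathbf{H}_\gamma\|_2 \le \sigma_{\max}^2/\gamma \to 0$ and hence $\mathbf{S}_{\mathrm{ridge}}\to \mathbf{0}$.

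There is one delicate point, and it is the only real obstacle. The hypothesis is essential for the first limit. If $\mathbf{S}$ were rank deficient, the zero eigenvalues would give $f_i\equiv 0$, and $\mathbf{H}_\gamma$ would converge instead to the orthogonal projector onto $\mathrm{row}$-space directions, namely $\mathbf{S}\mathbf{S}^{+}$. So I will state explicitly where $\mathrm{rank}(\mathbf{S})=K$ is used. I will also justify the parenthetical genericity claim briefly: for $K\le d$, the set of $K\times d$ matrices with $\det(\mathbf{S}\mathbf{S}^\top)=0$ is the zero set of a nontrivial polynomial, and so has measure zero.
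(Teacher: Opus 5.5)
Your proof is correct and is essentially what the paper intends: the paper states the corollary without a separate proof, as an immediate consequence of $\mathbf{S}_{\mathrm{ridge}}=\mathbf{H}_\gamma\mathbf{R}$ from Proposition~\ref{prop:collapse}, and your spectral decomposition of $\mathbf{S}\mathbf{S}^\top$ makes both limits explicit, with the rates $\gamma/(\sigma_{\min}^2+\gamma)$ and $\sigma_{\max}^2/(\sigma_{\max}^2+\gamma)$ as a bonus. One small slip in your side remark: in the rank-deficient case the limit $\mathbf{S}\mathbf{S}^{+}$ is the orthogonal projector onto the column space of $\mathbf{S}$ (the range of $\mathbf{S}\mathbf{S}^\top$ in $\mathbb{R}^K$), not onto ``row-space directions.''
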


Because $\mathbf{R}$ contains the embeddings of the $K$ sampled text tokens, its row space is at most $K$-dimensional and is determined entirely by the discrete token identities. Proposition~\ref{prop:collapse} shows that the ridge-aligned prefix is confined to this span for all $\gamma$, while Corollary~\ref{cor:degeneration} establishes the two limiting regimes: text reconstruction ($\gamma \to 0$) and silence ($\gamma \to \infty$). For intermediate $\gamma$, the $\mathbf{S}$-dependent coefficients in $\mathbf{H}_\gamma$ still encode continuous variation within $\mathrm{span}(\mathbf{R})$, but directions outside this subspace remain inaccessible.

\subsection{Geometric preservation under Procrustes alignment}
\label{app:theory_procrustes}

In contrast, the orthogonal Procrustes alignment preserves the internal geometry of the sender's states in the whitened space.

\begin{proposition}[Isometry of Procrustes Alignment]
\label{prop:isometry}
Let $\mathbf{Q}^* \in \mathbb{R}^{d \times d}$ be the orthogonal Procrustes solution. For any two rows $i, j$ of the whitened sender states $\mathbf{S}_w$:
\begin{align}
\|(\mathbf{S}_w \mathbf{Q}^*)_i - (\mathbf{S}_w \mathbf{Q}^*)_j\|_2 &= \|(\mathbf{S}_w)_i - (\mathbf{S}_w)_j\|_2, \\
\langle (\mathbf{S}_w \mathbf{Q}^*)_i,\, (\mathbf{S}_w \mathbf{Q}^*)_j \rangle &= \langle (\mathbf{S}_w)_i,\, (\mathbf{S}_w)_j \rangle.
\end{align}
Hence the sequence Gram matrix is preserved: $(\mathbf{S}_w \mathbf{Q}^*)(\mathbf{S}_w \mathbf{Q}^*)^\top = \mathbf{S}_w \mathbf{S}_w^\top$.
\end{proposition}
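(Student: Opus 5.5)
The plan is to reduce everything to the single fact that $\mathbf{Q}^*$ is orthogonal, i.e.\ $\mathbf{Q}^*\mathbf{Q}^{*\top} = \mathbf{I}_d$. Two routes establish this, and I would use the first.

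\emph{Route 1 (constraint).} $\mathbf{Q}^*$ is by definition a minimizer over the set $\{\mathbf{Q} : \mathbf{Q}^\top\mathbf{Q} = \mathbf{I}\}$. For a square $d\times d$ matrix, $\mathbf{Q}^\top\mathbf{Q} = \mathbf{I}$ already implies $\mathbf{Q}\mathbf{Q}^\top = \mathbf{I}$.

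\emph{Route 2 (closed form).} With the full SVD $\mathbf{S}_w^\top\mathbf{R}_w = \mathbf{U}\mathbf{D}\mathbf{V}^\top$, where $\mathbf{U},\mathbf{V}\in\mathbb{R}^{d\times d}$ are orthogonal, we get $\mathbf{Q}^*\mathbf{Q}^{*\top} = \mathbf{U}\mathbf{V}^\top\mathbf{V}\mathbf{U}^\top = \mathbf{I}$. One caveat applies here: $\mathbf{S}_w^\top\mathbf{R}_w$ has rank at most $K-1 < d$. The SVD factors are therefore not unique, and the full (square) SVD must be used rather than the thin one. Orthogonality holds for any such choice, so the non-uniqueness does not affect the argument.

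The rest is direct. Write $\mathbf{a}_i$ for the $i$-th row of $\mathbf{S}_w$ as a row vector, so the $i$-th row of $\mathbf{S}_w\mathbf{Q}^*$ is $\mathbf{a}_i\mathbf{Q}^*$.

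\textbf{Inner products.} Orthogonality gives
\[
\langle \mathbf{a}_i\mathbf{Q}^*, \mathbf{a}_j\mathbf{Q}^*\rangle = \mathbf{a}_i\mathbf{Q}^*\mathbf{Q}^{*\top}\mathbf{a}_j^\top = \mathbf{a}_i\mathbf{a}_j^\top ,
\]
which is the second identity. Collecting these over all pairs $(i,j)$ gives the Gram matrix statement:
\[
(\mathbf{S}_w\mathbf{Q}^*)(\mathbf{S}_w\mathbf{Q}^*)^\top = \mathbf{S}_w\mathbf{Q}^*\mathbf{Q}^{*\top}\mathbf{S}_w^\top = \mathbf{S}_w\mathbf{S}_w^\top .
\]

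\textbf{Distances.} Apply the same computation to the difference vector $\mathbf{a}_i - \mathbf{a}_j$. Alternatively, expand
\[
\|\mathbf{x}-\mathbf{y}\|^2 = \langle \mathbf{x},\mathbf{x}\rangle - 2\langle \mathbf{x},\mathbf{y}\rangle + \langle \mathbf{y},\mathbf{y}\rangle
\]
and use the preserved inner products. Taking square roots gives the first identity.

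\textbf{Main obstacle.} There is no substantive obstacle. The only care point is the one flagged in Route 2: $\mathbf{Q}^*$ must be a genuine square orthogonal matrix, not merely a partial isometry from a truncated SVD. The preservation properties depend only on feasibility, not on optimality. I would also note that they hold in the whitened coordinates only. The later multiplication by $\boldsymbol{\Sigma}_R^{1/2}$ and the norm and anchoring steps are not isometries, so the proposition makes no claim about the final prefix $\bar{\mathbf{S}}$.
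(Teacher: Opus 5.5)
Your proposal is correct and follows the paper's proof: orthogonality of $\mathbf{Q}^*$ gives $(\mathbf{S}_w\mathbf{Q}^*)(\mathbf{S}_w\mathbf{Q}^*)^\top = \mathbf{S}_w\mathbf{S}_w^\top$, whose entries are the pairwise inner products, and the distance identity follows from expanding $\|\mathbf{a}-\mathbf{b}\|_2^2$. Your remark that $\mathbf{Q}^\top\mathbf{Q}=\mathbf{I}$ implies $\mathbf{Q}\mathbf{Q}^\top=\mathbf{I}$ for square matrices makes explicit a step the paper uses without comment, since its computation actually relies on $\mathbf{Q}^*(\mathbf{Q}^*)^\top=\mathbf{I}$.
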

\begin{proof}
Since $(\mathbf{Q}^*)^\top \mathbf{Q}^* = \mathbf{I}$, right-multiplying any matrix by $\mathbf{Q}^*$ preserves its Gram matrix: $(\mathbf{S}_w \mathbf{Q}^*)(\mathbf{S}_w \mathbf{Q}^*)^\top = \mathbf{S}_w \mathbf{Q}^* (\mathbf{Q}^*)^\top \mathbf{S}_w^\top = \mathbf{S}_w \mathbf{S}_w^\top$. Since the $(i,j)$ entry of the Gram matrix equals $\langle (\mathbf{S}_w)_i,\, (\mathbf{S}_w)_j \rangle$, inner products between rows are preserved. Distance preservation follows because $\|\mathbf{a} - \mathbf{b}\|_2^2 = \|\mathbf{a}\|_2^2 - 2\langle \mathbf{a},\, \mathbf{b} \rangle + \|\mathbf{b}\|_2^2$ depends only on inner products.
\end{proof}

The isometry guarantee established above applies to the orthogonal rotation step in whitened coordinates; the subsequent de-whitening, norm calibration, and vocabulary anchoring steps modify the final prefix further. Within this scope, the key property is that Procrustes preserves the pairwise Euclidean structure of the whitened sender states: tokens that were geometrically close (or distant) in the sender's internal representation remain so after the alignment step. By contrast, an unconstrained linear map such as ridge regression arbitrarily rescales and shears these relationships, even when the pointwise reconstruction error is low.

The combination of Propositions~\ref{prop:collapse} and~\ref{prop:isometry} provides a geometric rationale for the ablation results in Section~\ref{sec:analysis}: ridge regression distorts the pairwise structure of the sender's representation during alignment, while the orthogonal Procrustes step preserves it. This distinction is especially relevant for tasks such as code generation, where the receiver must produce structurally precise output and is therefore sensitive to geometric distortion in the input representation. Together, these results suggest that the orthogonal constraint is more than a regularization choice: it ensures that the alignment step preserves the internal geometric relationships of the whitened sender representation.

\section{Additional experimental details}
\label{app:details}

\subsection{Dataset descriptions}
\label{app:datasets}

We provide brief descriptions of all evaluation benchmarks, grouped by the three categories introduced in Section~\ref{sec:exp_setup}.

\textbf{Mathematical Reasoning.}
\begin{itemize}
\item \textbf{GSM8K} \citep{gsm8k} contains 8.5K grade-school math word problems that require multi-step numerical reasoning. Each problem must be decomposed into structured arithmetic steps, making it a standard testbed for chain-of-thought reasoning.
\item \textbf{AIME24} \citep{aime24} consists of 30 competition-level problems from the 2024 American Invitational Mathematics Examination. Problems span algebra, geometry, number theory, and combinatorics, and require precise numeric answers.
\item \textbf{AIME25} \citep{aime25} provides 30 additional problems from the 2025 AIME exam. Compared with AIME24, this set includes more multi-phase derivations and intricate combinatorial constructions, offering a complementary test of mathematical reasoning.
\end{itemize}

\textbf{Knowledge-Intensive QA.}
\begin{itemize}
\item \textbf{GPQA-Diamond} \citep{gpqa} is the most difficult split of the GPQA benchmark, featuring 198 graduate-level multiple-choice questions written by domain experts in physics, biology, and chemistry. The dataset emphasizes conceptual depth and cross-disciplinary reasoning.
\item \textbf{ARC-Challenge} \citep{arc-challenge} includes the most difficult items from the AI2 Reasoning Challenge. These questions require multi-hop reasoning and systematic elimination of distractors, making performance on this benchmark a useful indicator of multi-step reasoning.
\item \textbf{MedQA} \citep{medqa} contains real medical licensing exam questions that assess biomedical knowledge, clinical reasoning, and diagnostic decision-making. Problems require integrating textual context with domain-specific medical understanding.
\end{itemize}

\textbf{Code Generation.}
\begin{itemize}
\item \textbf{MBPP+} \citep{codeplus} extends the original MBPP benchmark with broader input coverage, additional hidden test cases, and stricter execution-based evaluation. Each problem requires generating a self-contained Python function that satisfies a comprehensive unit-test suite.
\item \textbf{HumanEval+} \citep{codeplus} augments HumanEval with denser, more challenging test suites, increasing the rigor of functional correctness evaluation. The benchmark emphasizes generalization beyond prompt examples and tests a model's ability to produce semantically precise, executable Python code.
\end{itemize}

\raggedbottom
\subsection{Agent pipeline}
\label{app:agent_pipeline}

Following \citet{zou2025latentcollaborationmultiagentsystems}, we instantiate a sequential four-agent pipeline: (i) \textit{Planner} decomposes the problem and outlines a solution strategy; (ii) \textit{Critic} identifies potential errors or gaps in the plan; (iii) \textit{Refiner} incorporates feedback and produces an improved solution; and (iv) \textit{Judger} synthesizes the preceding outputs and generates the final answer. All methods share nearly identical prompts; the only difference is the description of the communication modality (``text format'' for TextMAS, ``latent KV representation format'' for LatentMAS, ``embedding representation format'' for StateBridge).

\subsection{Implementation details}
\label{app:impl_details}

We implement all methods in Python using PyTorch\footnote{\url{https://pytorch.org/}} and HuggingFace Transformers\footnote{\url{https://huggingface.co/docs/transformers/v4.25.1/index}}. We use each model's official chat template and special tokens for prompt formatting. For StateBridge, we extract final-layer hidden states from the sender's generation pass via a forward hook on the last transformer layer and compute the Procrustes alignment (centering, whitening, SVD, and reconstruction) on GPU. We then concatenate the aligned prefix with the receiver's prompt embeddings and pass it through the standard forward method via \texttt{inputs\_embeds}, with no modification to the model architecture or attention mask.

Following \citet{zou2025latentcollaborationmultiagentsystems}, we set the maximum output length to 2,048 tokens for GSM8K and ARC-C, 4,096 tokens for MBPP+ and HumanEval+, 8,192 tokens for MedQA and GPQA, and 20,000 tokens for AIME24/25.

\subsection{Evaluation protocol}
\label{app:eval}

We evaluate all methods using task-specific protocols.

\textbf{Multiple-choice QA} (GPQA-Diamond, MedQA, ARC-Challenge). We extract the model's final answer string and compare it via exact match to the ground-truth answer letter.

\textbf{Mathematical reasoning} (GSM8K, AIME24, AIME25). We extract the final predicted answer, parse both prediction and ground truth into numbers, and mark the sample as correct only if the two values match. Predictions that fail numeric parsing are counted as incorrect.

\textbf{Code generation} (MBPP+, HumanEval+). We extract the predicted code from the model's output, append the ground-truth unit tests provided by the benchmark, and execute the combined script in a sandboxed environment with a 10-second timeout. A sample is counted as correct if and only if all tests pass without runtime errors.

For all non-coding benchmarks, answer extraction applies text normalization (lowercasing, trimming whitespace, and removing extraneous punctuation) before matching.

\section{Prompt templates}
\label{app:prompts}

The following prompt templates are shown for StateBridge. The Planner, Critic, and Refiner prompts are shared across all task categories; only the Judger prompt varies to specify the answer format.

\begin{tcolorbox}[colback=RoyalBlue!6, colframe=RoyalBlue!60!black, title={StateBridge Prompts for Mathematical Reasoning Tasks (GSM8K / AIME24 / AIME25)}, fonttitle=\bfseries\small, fontupper=\small]
\textbf{System Prompt for All Agents:}\\
You are Qwen/Olmo. You are a helpful assistant.

\medskip
\textbf{Prompt for Planner Agent:}\\
You are a Planner Agent. Given an input question, design a clear, step-by-step plan for how to solve the question.\\
Question: \{question\}\\
Your outlined plan should be concise with a few bullet points for each step. Do not produce the final answer. Now output your plan to solve the question below:

\medskip
\textbf{Prompt for Critic Agent:}\\
The following is the message from previous Agent (provided in embedding format):\\
{[Aligned continuous prefix is inserted here]}\\
\\
Question: \{question\}\\
You are a Critic Agent to evaluate the correctness of the input plan for the given question and provide helpful feedback for improving the plan. The plan information is provided in embedding representation format. Review the plan and question and output: (1) original plan contents (2) constructive feedback on the original plan.\\
Format your response as follows:\\
Original Plan: [Copy the provided Planner Agent's plan here]\\
Feedback: [Your detailed feedback to improve the plan here]\\
Now, output your response below:

\medskip
\textbf{Prompt for Refiner Agent:}\\
The following is the message from previous Agent (provided in embedding format):\\
{[Aligned continuous prefix is inserted here]}\\
\\
Question: \{question\}\\
You are a Refiner Agent to provide a refined step-by-step plan for solving the given question. You are provided with: (1) embedding-format information: a previous plan with feedback (2) text-format information: the input question you need to solve.\\
Based on the input, write a refined and improved plan to solve the question. Make sure your output plan is correct and concise.\\
Now, output your refined plan below:

\medskip
\textbf{Prompt for Judger Agent:}\\
The following is the message from previous Agent (provided in embedding format):\\
{[Aligned continuous prefix is inserted here]}\\
\\
Target Question: \{question\}\\
You are a helpful assistant. You are provided with embedding information for reference and a target question to solve. The embedding information might contain irrelevant contents. Ignore it if it is not helpful for solving the target question.\\
You must reason step-by-step to solve the provided Target Question without outputting other irrelevant information.\\
Now, reason step by step and output the final answer inside \textbackslash boxed\{YOUR\_FINAL\_ANSWER\}.
\end{tcolorbox}

\begin{tcolorbox}[colback=ForestGreen!6, colframe=ForestGreen!60!black, title={StateBridge Prompts for Multiple-Choice Tasks (ARC-C / GPQA / MedQA)}, fonttitle=\bfseries\small, fontupper=\small]
\textbf{System Prompt for All Agents:}\\
You are Qwen/Olmo. You are a helpful assistant.

\medskip
\textbf{Prompt for Planner Agent:}\\
You are a Planner Agent. Given an input question, design a clear, step-by-step plan for how to solve the question.\\
Question: \{question\}\\
Your outlined plan should be concise with a few bullet points for each step. Do not produce the final answer. Now output your plan to solve the question below:

\medskip
\textbf{Prompt for Critic Agent:}\\
The following is the message from previous Agent (provided in embedding format):\\
{[Aligned continuous prefix is inserted here]}\\
\\
Question: \{question\}\\
You are a Critic Agent to evaluate the correctness of the input plan for the given question and provide helpful feedback for improving the plan. The plan information is provided in embedding representation format. Review the plan and question and output: (1) original plan contents (2) constructive feedback on the original plan.\\
Format your response as follows:\\
Original Plan: [Copy the provided Planner Agent's plan here]\\
Feedback: [Your detailed feedback to improve the plan here]\\
Now, output your response below:

\medskip
\textbf{Prompt for Refiner Agent:}\\
The following is the message from previous Agent (provided in embedding format):\\
{[Aligned continuous prefix is inserted here]}\\
\\
Question: \{question\}\\
You are a Refiner Agent to provide a refined step-by-step plan for solving the given question. You are provided with: (1) embedding-format information: a previous plan with feedback (2) text-format information: the input question you need to solve.\\
Based on the input, write a refined and improved plan to solve the question. Make sure your output plan is correct and concise.\\
Now, output your refined plan below:

\medskip
\textbf{Prompt for Judger Agent:}\\
The following is the message from previous Agent (provided in embedding format):\\
{[Aligned continuous prefix is inserted here]}\\
\\
Target Question: \{question\}\\
You are a helpful assistant. You are provided with embedding information for reference and a target question to solve. The embedding information might contain irrelevant contents. Ignore it if it is not helpful for solving the target question.\\
You must reason step-by-step to solve the provided Target Question without outputting other irrelevant information.\\
Your final answer must be selected from A,B,C,D. For example \textbackslash boxed\{A\}. Do not add any other contents inside the box.\\
Now, reason step by step and output the final answer inside \textbackslash boxed\{YOUR\_FINAL\_ANSWER\}.
\end{tcolorbox}

\begin{tcolorbox}[colback=BurntOrange!6, colframe=BurntOrange!60!black, title={StateBridge Prompts for Code Generation Tasks (MBPP+ / HumanEval+)}, fonttitle=\bfseries\small, fontupper=\small]
\textbf{System Prompt for All Agents:}\\
You are Qwen/Olmo. You are a helpful assistant.

\medskip
\textbf{Prompt for Planner Agent:}\\
You are a Planner Agent. Given an input question, design a clear, step-by-step plan for how to solve the question.\\
Question: \{question\}\\
Your outlined plan should be concise with a few bullet points for each step. Do not produce the final answer. Now output your plan to solve the question below:

\medskip
\textbf{Prompt for Critic Agent:}\\
The following is the message from previous Agent (provided in embedding format):\\
{[Aligned continuous prefix is inserted here]}\\
\\
Question: \{question\}\\
You are a Critic Agent to evaluate the correctness of the input plan for the given question and provide helpful feedback for improving the plan. The plan information is provided in embedding representation format. Review the plan and question and output: (1) original plan contents (2) constructive feedback on the original plan.\\
Format your response as follows:\\
Original Plan: [Copy the provided Planner Agent's plan here]\\
Feedback: [Your detailed feedback to improve the plan here]\\
Now, output your response below:

\medskip
\textbf{Prompt for Refiner Agent:}\\
The following is the message from previous Agent (provided in embedding format):\\
{[Aligned continuous prefix is inserted here]}\\
\\
Question: \{question\}\\
You are a Refiner Agent to provide a refined step-by-step plan for solving the given question. You are provided with: (1) embedding-format information: a previous plan with feedback (2) text-format information: the input question you need to solve.\\
Based on the input, write a refined and improved plan to solve the question. Make sure your output plan is correct and concise.\\
Now, output your refined plan below:

\medskip
\textbf{Prompt for Judger Agent:}\\
The following is the message from previous Agent (provided in embedding format):\\
{[Aligned continuous prefix is inserted here]}\\
\\
Target Question: \{question\}\\
You are a helpful assistant. You are provided with embedding information for reference and a target question to solve. The embedding information might contain irrelevant contents. Ignore it if it is not helpful for solving the target question.\\
You must reason step-by-step to solve the provided Target Question without outputting other irrelevant information.\\
You must put all python code as self-contained Python function in markdown code blocks. For example \texttt{```python}\\
\texttt{import math}\\
\texttt{def add(a, b):}\\
\quad\texttt{return a + b}\texttt{```}. Do not add any other contents inside the markdown code block.\\
Now, reason step by step and output the final answer inside \texttt{```python YOUR\_PYTHON\_CODE ```}.
\end{tcolorbox}

\label{app:results}

\end{document}